\newtheorem{theorem}{Theorem}[section]
\newtheorem{lemma}[theorem]{Lemma}
\begin{document}

\title{Progressively Exploring and Exploiting Inference Data to Break Fine-Grained Classification Barrier}

\author{Li-Jun Zhao, Si-Yuan Zhang, Zhen-Duo Chen, Xin Luo, Xin-Shun Xu
        % <-this % stops a space
% \thanks{This work was supported in part by National Natural Science Foundation of China (Grant No.62202272, No.62172256, and No.62202278).}% <-this % stops a space
\thanks{L.-J. Zhao, S.-Y. Zhang, Z.-D. Chen, X. Luo, and X.-S. Xu are with the School of Software, Shandong University, Jinan, 250101, China. (e-mail: lj\_zhao1028@163.com; 1638682606@qq.com; chenzd.sdu@gmail.com; luoxin.lxin@gmail.com; xuxinshun@sdu.edu.cn).}
\thanks{This paper is published in Chinese in SCIENTIA SINICA Informationis. The final published version is available at \url{https://doi.org/10.1360/SSI-2025-0502}.}
}
% The paper headers
% \markboth{Journal of \LaTeX\ Class Files,~Vol.~14, No.~8, August~2021}%
% {Shell \MakeLowercase{\textit{et al.}}: A Sample Article Using IEEEtran.cls for IEEE Journals}

% \IEEEpubid{0000--0000/00\$00.00~\copyright~2021 IEEE}
% Remember, if you use this you must call \IEEEpubidadjcol in the second
% column for its text to clear the IEEEpubid mark.

\maketitle

\begin{abstract}
Current fine-grained classification research primarily focuses on fine-grained feature learning. However, in real-world scenarios, fine-grained data annotation is challenging, and the features and semantics are highly diverse and frequently changing. These issues create inherent barriers between traditional experimental settings and real-world applications, limiting the effectiveness of conventional fine-grained classification methods. Although some recent studies have provided potential solutions to these issues, most of them still rely on limited supervised information and thus fail to offer effective solutions.
In this paper, based on theoretical analysis, we propose a novel learning paradigm to break the barriers in fine-grained classification. This paradigm enables the model to progressively learn during inference, thereby leveraging cost-free data at inference time to more accurately represent fine-grained categories and adapt to dynamic semantic changes.
On this basis, an efficient EXPloring and EXPloiting strategy and method (EXP2) is designed. Thereinto, useful inference data samples are explored according to class representations and exploited to optimize classifiers. Experimental results demonstrate the general effectiveness of our method, providing guidance for future in-depth understanding and exploration of real-world fine-grained classification.
\end{abstract}

\begin{IEEEkeywords}
Fine-grained classification, real-world applications, cost-free inference data, explore–exploit strategy, dynamic semantic adaptation.
\end{IEEEkeywords}

\section{Introduction}
\IEEEPARstart{F}{ine-grained} classification \cite{DBLP:conf/iccv/LinRM15, DBLP:journals/tmm/HuangLXWL17} focuses on distinguishing subcategories within a general superclass (e.g., animal species, car models, and aircraft types), which is crucial for many real-world applications. 
For a long time, mainstream research on fine-grained classification has focused on developing better models to enhance fine-grained feature learning capabilities. 
However, in real-world applications, the bigger issue often lies in the data. 
On one hand, the high specialization of fine-grained data categories significantly makes annotation more difficult, thereby greatly limiting the scale of labeled training data; on the other hand, fine-grained tasks, primarily targeting applications in open environments such as ecological protection and security monitoring \cite{DBLP:journals/tmm/LiuZDXLL25}, face challenges of high feature diversity and frequent semantic changes in data.
These issues create inherent barriers between traditional experimental settings and real-world scenarios, limiting the effectiveness of conventional fine-grained classification methods in practical applications.
% These issues create inevitable barriers between traditional experimental settings and real-world applications, making it hard to apply traditional fine-grained classification methods in real-world applications effectively.
% As a result, fine-grained classification methods developed under traditional experimental settings are usually difficult to apply effectively in real-world applications.
% This makes it difficult for fine-grained classification methods developed under classic experimental settings to be effectively applied in real-world applications.
\begin{figure}[t]
\centering
\includegraphics[width=1\columnwidth]{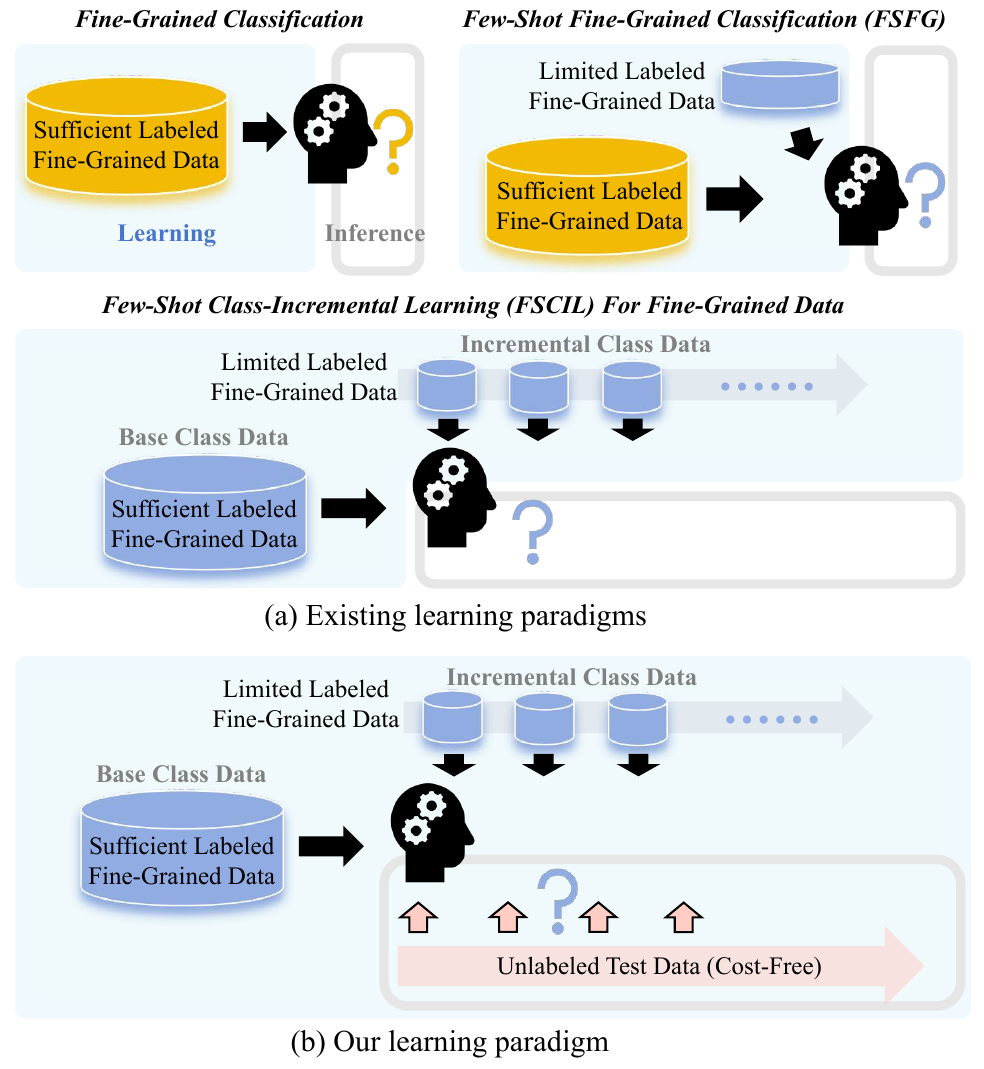} % Reduce the figure size so that it is slightly narrower than the column. Don't use precise values for figure width.This setup will avoid overfull boxes.
\caption{
The diagrams of traditional fine-grained classification, FSFG, and more flexible FSCIL paradigms, along with our proposed learning paradigm of learning during inference.}
\label{fig:idea}
\end{figure}

If considered separately, 
the aforementioned issues pertain to few-shot learning \cite{DBLP:conf/nips/SnellSZ17} and class-incremental learning \cite{DBLP:conf/cvpr/KangPH22}, which aim to reduce supervision information requirements and adapt models to new semantics that emerge in open environments.
% the aforementioned issues pertain to the few-shot learning \cite{DBLP:conf/nips/SnellSZ17} and class-incremental learning \cite{DBLP:conf/cvpr/KangPH22} problems, respectively.
% Their goal is to minimize the amount of supervision information needed for model training and to enable models to adapt to new data semantics that emerge in open environments.
As shown in Fig. \ref{fig:idea}(a), 
unlike traditional fine-grained classification, few-shot fine-grained classification (FSFG) \cite{DBLP:journals/tip/Wei0LSW19} aims to enable models to recognize novel fine-grained classes using only a few labeled samples.
This is typically achieved after the models have been initialized or pre-trained on auxiliary datasets with sufficient labeled samples.
Furthermore, few-shot class-incremental learning (FSCIL) \cite{DBLP:conf/cvpr/TaoHCDWG20} endeavors to utilize the abundant training samples of base classes (initial classes) to help learn the continuously arriving incremental classes (subsequent classes) with limited labeled samples. 

\IEEEpubidadjcol
However, 
although the FSCIL paradigm aligns with the requirements of real-world fine-grained classification scenarios in form, it still faces two key limitations. 
1) There is an inherent contradiction between FSCIL and 
accurate fine-grained class representation.
% fine-grained feature learning. 
Fine-grained data typically exhibit small inter-class differences and large intra-class variations. 
This means that relying solely on limited supervised data is insufficient for capturing the diverse intra-class features needed to accurately represent fine-grained classes, leading to significant accuracy degradation.
% especially when inter-class differences are small. 
Moreover, 
since the position of limited supervised data within the class distribution remains fixed after training, 
this issue cannot be resolved through model optimization or learning strategies; it can only rely on more diverse intra-class data.
In the experiments, FSCIL methods on fine-grained datasets still perform much worse on incremental classes with limited samples compared to base classes.
% with sufficient samples when applied to 
% when applying existing FSCIL methods to fine-grained data, the performance on incremental classes with limited samples is much worse than that on base classes with sufficient samples.
% \cite{DBLP:conf/nips/Wang0ZZY23,DBLP:journals/tmm/CuiYPTL24}. 
This indicates that for fine-grained data, few-shot learning is more of a compromise for limited labeled data rather than an effective solution.
2) In practice,
even if a model follows the FSCIL paradigm to handle streaming data, 
selecting and labeling limited training samples in open environments still requires manual control, preventing the model from reflecting real-time data changes and actual user needs.
% the selection and labeling of limited training samples in an open environment still require manual control to reflect the actual user needs. This inevitably prevents the method from accurately reflecting and responding to real-time data changes.
Clearly, for real-world fine-grained classification problems, which simultaneously face challenges such as discriminative difficulty, intra-class feature diversity, limited supervised data, and dynamic data changes, it is difficult to find effective solutions directly from existing research.
This motivates us to step beyond current experimental assumptions and approach the problem from a more open perspective.

Based on the above discussion and further theoretical analysis, 
we propose a novel learning paradigm to address the fine-grained classification problem in real-world scenarios. This aims to offer a solution to the contradiction between modeling fine-grained class distributions and limited supervised data, enabling effective modeling of both data distribution and dynamic semantic changes. 
 By incorporating real-world dynamics and temporality, our novel learning paradigm allows learning during inference, rather than treating them as separate phases, as illustrated in Fig. \ref{fig:idea}(b). 
 In the long term, model optimization and meeting user needs occur simultaneously, using all accessible data (including labeled data and inference data from users) for optimization.
%  That is, in the long term, model optimization and meeting user needs occur simultaneously.
% The model can utilize all accessible data, including labeled training data and inference data from users, for self-optimization.
This paradigm more closely aligns with real-world applications without additional data collection or labeling.
Based on this, 1) the limitations posed by insufficient supervised data in learning fine-grained class distributions are expected to be overcome, enabling more accurate class representations and better classification boundaries. 2) User data can more accurately and timely reflect task requirements and data distribution changes.
The rational use of user data can enhance the model's adaptability and performance in dynamic data scenarios.

To demonstrate the research value of this novel learning paradigm, we preliminarily design an effective and efficient method based on the FSCIL framework, termed EXP2.
It introduces an optimization procedure that progressively EXPlores useful inference samples relying on class representations and EXPloits them to optimize the classifier distribution in the feature space.
For clearness and fairness, we introduce and evaluate the method based on the FSCIL framework, but it is not limited to this framework. In future work, we will extend this new problem and learning paradigm beyond the FSCIL framework, developing more refined and diverse settings.

Our main contributions are summarized as follows:
\begin{itemize}
    \item 
    By comparing current solutions with real-world scenarios and analyzing barriers in fine-grained classification, we propose a novel learning paradigm that leverages cost-free data to more accurately represent fine-grained categories and adapt to dynamic changes.
    % By comparing current solutions with real-world scenarios and theoretical analysis, we propose a novel learning paradigm to break barriers in fine-grained image classification.
    % and provide a theoretical analysis.
    % By comparing current solutions with real-world scenarios, we propose a novel learning paradigm that better adapts to fine-grained image classification. 
    % It breaks barriers to utilizing cost-free user data, enabling the full learning of fine-grained class distributions and adaptation to dynamic data changes and task requirements.
    \item Based on this learning paradigm, we design a novel strategy that works as an effective and efficient plug-in method (EXP2) built upon the FSCIL framework.
    \item Extensive experiments demonstrate the general effectiveness of the proposed method and strategy, offering valuable insights for further exploration of the fine-grained classification problem and the FSCIL scenario.
\end{itemize}

\section{Related Work}
\subsection{Fine-Grained Classification} 
Fine-grained classification \cite{DBLP:conf/cvpr/ZhangXZLT16,DBLP:journals/tmm/WangFM24} aims to distinguish different subcategories from a traditional superclass.
% such as different species of birds, different breeds of dogs, or different models of airplanes.
% Compared with general classification tasks, fine-grained classification demands that the model is more efficient at learning subtle differences between images of different classes. 
Compared to general classification tasks, fine-grained classification requires the model to be more adept at learning subtle differences between images of different classes.
% DBLP:conf/cvpr/YangWCX022
Therefore, fine-grained classification methods put more emphasis on feature learning than traditional methods. 
% \cite{DBLP:conf/aaai/ZhuangW020,DBLP:journals/tcsv/WangWLCOT23}. 
Among them, part-based methods \cite{DBLP:conf/cvpr/ZhangXZLT16,DBLP:journals/tcsv/WangWLCOT23} focus on identifying discriminative regions using attention sub-networks and extracting distinctive features for classification.
Part-free methods \cite{DBLP:conf/iccv/LinRM15,DBLP:journals/pr/XuWWYWYW24} aim to enhance the backbone’s recognition ability by weighting features or using augmented samples as training data.
% DBLP:conf/aaai/ZhuangW020
Unlike them, some recent methods \cite{DBLP:conf/aaai/Jiang0GDHL24,DBLP:journals/tmm/XuZZGFML24} explore the multimodal information to aid classification. However, obtaining supervised multimodal information is still time-consuming and labor-intensive.

\subsection{Few-Shot Fine-Grained Classification}
Fine-grained classification methods often require a large-scale fine-grained data volume, which is less practical in real scenarios because fine-grained data is challenging to label.
Therefore, Few-Shot Fine-Grained Classification (FSFG) \cite{DBLP:journals/tmm/HuangZZXW21} is proposed to address the challenge of recognizing novel fine-grained categories with a small number of labeled samples.
% DBLP:conf/mm/WangFM22 DBLP:journals/tip/LiWSMCX21,
However, these methods \cite{DBLP:journals/pr/TangYLT22,DBLP:conf/aaai/MaCZ00X24,DBLP:journals/tip/ZhaoCMLX24} still focus on designing more robust network structures, more appropriate optimization objectives, or more sophisticated training techniques to achieve better feature learning with closed training samples, ignoring the more open forms of data that exist in reality.

\subsection{Few-Shot Class-Incremental Learning}
Few-Shot Class-Incremental Learning (FSCIL) \cite{DBLP:conf/nips/ZouZLL22,DBLP:journals/tmm/LiCXLB24} defines a more open and realistic dynamic learning scenario. 
After learning base classes with sufficient training samples, the model can continuously learn new classes with limited labeled samples, allowing the model to distinguish all learned classes.
% Among them, some methods \cite{DBLP:conf/cvpr/ZhangSLZPX21,DBLP:conf/nips/ZouZLL22,DBLP:conf/cvpr/0001WYMPZ22} freeze the parameters of the feature extractor in incremental sessions; 
% other methods \cite{DBLP:conf/aaai/DongHTCWG21,DBLP:conf/cvpr/Zhao0XC0NF23,DBLP:journals/pami/YangLZLLJY23} use limited incremental class samples to update the feature extractor in incremental sessions.
Due to the high proportion of base classes in the total number of classes, the accuracy of incremental classes, which is crucial for the FSCIL problem, is often overlooked  \cite{DBLP:conf/aaai/DongHTCWG21,DBLP:conf/iccv/CheraghianRRFSP21}.
% FSCIL methods often focus on the overall accuracy and performance dropping rate \cite{DBLP:conf/aaai/DongHTCWG21,DBLP:conf/iccv/CheraghianRRFSP21}. However, due to the high proportion of base classes in the total number of classes, maintaining high base class accuracy can achieve high overall accuracy and low performance dropping rate. Unfortunately, the accuracy of all incremental classes, which is crucial for the FSCIL problem, is ignored. 
Some recent methods \cite{DBLP:conf/nips/Wang0ZZY23,DBLP:journals/corr/abs-2402-00481} have identified this issue and tried to address it,
but their designs still limit model optimization to the training samples.
% overlooking other accessible useful knowledge under the current problem setting. 
% The FSCIL problem considers data to appear in a stream in a more open and dynamic manner, making it more suitable for fine-grained classification scenarios where data annotation is difficult and new semantics continuously emerge.
% However, the rigid design of current methods, coupled with the extremely limited training data in incremental sessions, fundamentally restricts the performance of incremental classes and fails to reflect data changes in real-time. 
% Moreover, among the standard datasets for FSCIL, only CUB200 is a fine-grained dataset.
Compared to traditional datasets with large inter-class differences, characterizing class distributions using limited data is more challenging for fine-grained datasets.
% where intra-class variations are large but inter-class differences are subtle.
Although recent work \cite{DBLP:journals/pr/PanZYZG24,zhao2025evolving} has turned attention to the FSCIL problem of fine-grained datasets, the solutions still maintain the rigid design of previous FSCIL methods.

\subsection{Test-Time Adaptation}
\label{sec:ttar}
Test-Time Adaptation (TTA) \cite{DBLP:conf/nips/IwasawaM21,DBLP:conf/iclr/JangCC23} typically focuses on adapting to the target domain using test-time data. 
% For instance, TTT \cite{DBLP:conf/icml/SunWLMEH20} adapts models through self-supervised proxy tasks during testing, relying on the same proxy task introduced during the training phase.
Continual test-time adaptation has been proposed to handle the continuous changes in test data distributions. However, all these TTA methods do not consider the performance on the source domain or semantic expansion; they focus solely on adapting to the test data.
In contrast, our method aims to leverage test data to refine the fine-grained class representations established by supervised data in dynamic and open scenarios, where new semantics continuously emerge and labeled data remains scarce. 
Moreover, recent TTA efforts focus on large Vision-Language Models \cite{DBLP:conf/cvpr/KarmanovGLEX24,DBLP:conf/aaai/Liu0PZ24}, which differ significantly from the problem we aim to address.

\begin{figure*}[!t]
\centering
\includegraphics[width=1\textwidth]{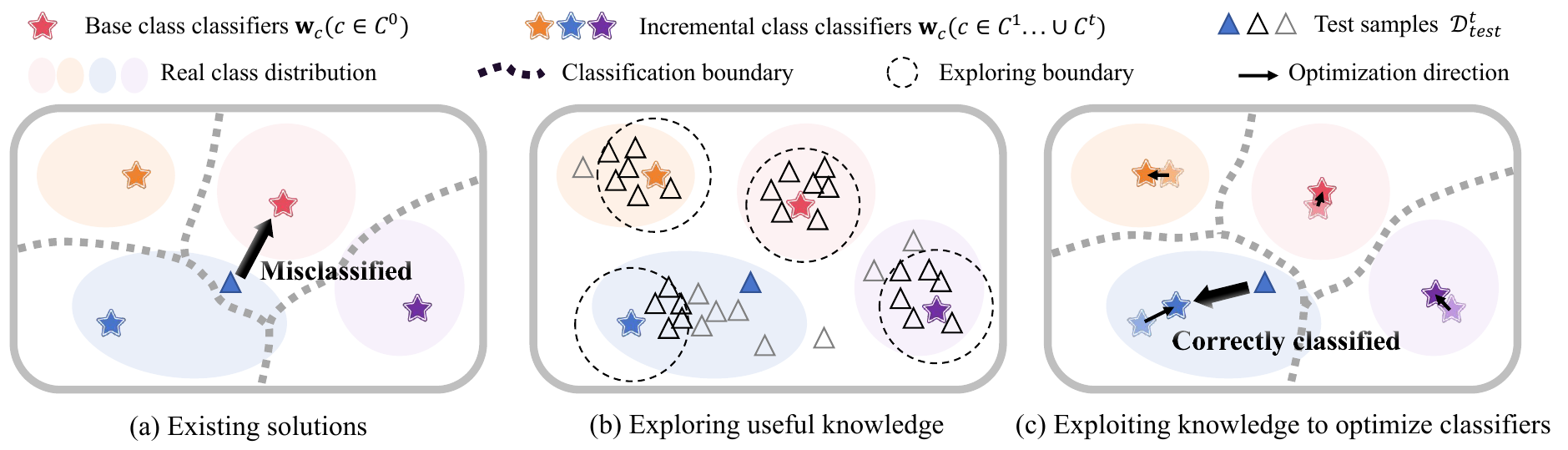} % Reduce the figure size so that it is slightly narrower than the column.
\caption{Illustration of existing solutions and EXP2 method. 
% Different colors represent different classes. 
(a) Existing solutions use scarce training samples to obtain classifiers, which often causes classifiers to deviate from the real distribution, leading to misclassification. 
(b)(c) EXP2 dynamically explores useful knowledge from test samples to optimize classifiers, thereby obtaining more accurate classification boundaries and enabling correct classification.}
\label{fig1}
\end{figure*}
\section{Methodology}
% To validate the effectiveness of the strategy proposed in the introduction, we design a specific method.
% As described in the Introduction, on one hand, fine-grained data annotation is challenging, resulting in limited training data; On the other hand, fine-grained data is highly diverse and frequently changing, necessitating models that can dynamically adapt.
% However, existing methods for fine-grained classification (including FSFG methods and FSCIL methods) still focus on pre-established limited training data. 
% In this section, we first introduce the problem definition, then theoretically analyze the necessity of the novel learning paradigm and present our basic strategy. Finally, we present the details of the proposed EXP2 method.
% In this section, we first introduce the problem definition and our basic strategy, then present the details of the proposed EXP2 method.
% adopt the FSCIL problem definition to describe the current problem, enabling the method and experiments to adhere to existing standards.
% Subsequently, 
% we introduce the strategy of breaking fine-grained data barriers with cost-free data and propose a specific implementation.
% Based on this, 
% based on the proposed strategy, we present a straightforward and efficient plug-in method for the inference phase within the FSCIL framework.
% we propose a simple and effective plug-in method to address the scarcity of fine-grained data within the FSCIL framework.
% It includes exploring useful knowledge and exploiting knowledge to optimize classifiers during inference.

\subsection{Problem Definition}
To ensure the clearness of the paper and the fairness of the experiments, 
we start from the standard FSCIL setup, the problem our method aims to solve can be defined as follows:
% we define the problem our method aims to solve by following the existing FSCIL problem definition. The method can theoretically be applied to more flexible and dynamic real-world scenarios.
% As analyzed in the introduction, the existing FSCIL scenario is suitable for describing the problem of fine-grained classification with scarce samples in a dynamic and open environment. We adhere to the FSCIL problem standard setting to ensure that the proposed method can be fairly compared with existing FSCIL methods.
% The specific definition is as follows:

The entire system runtime can be divided into $(T+1)$ sessions, with training sets $\left\{\mathcal{D}_{train}^{t}\right\}_{t=0}^T$ arriving in $(T+1)$ sessions. 
$\mathcal{D}_{train}^{0}$ in the base session (session 0) contains sufficient labeled samples. 
$\mathcal{D}_{train}^{t} (t\ge 1)$ in the incremental session $t$ with scarce labeled samples can be organized in an $N$-way $K$-shot format, i.e., only $K$ samples for each of the $N$ classes.
This setup considers both the scarcity of labeled data and the emergence of new fine-grained semantics in real-world fine-grained classification tasks.
Each $\mathcal{D}_{train}^{t}$ consists of pairs $\left(\mathbf{x}, y\right)$, where $\mathbf{x}$ is a sample from class $y \in \mathcal{C}^{t}$.
$\mathcal{C}^{t}$ is the label set of $\mathcal{D}_{train}^{t}$, and $|\mathcal{C}^{t}|$ represents the number of classes in the set $\mathcal{C}^{t}$. 
% In each session $t$, the classes from $\mathcal{C}^0$ are referred to as base classes, while those from $\mathcal{C}^1\cdots\cup\mathcal{C}^t$ are termed incremental classes. 
All classes $\mathcal{C}^{0}\cdots\cup\mathcal{C}^T$ belong to a superclass.
$\forall i,j\mathrm{~and~}i\neq j,\mathcal{C}^{i}\cap\mathcal{C}^{j}=\varnothing.$

The inter-class and intra-class differences of fine-grained categories are defined as follows,
\begin{equation}
\delta_{\text{inter}} = \min_{c\neq c'}\|\mathbf{\mu}_c - \mathbf{\mu}_{c'}\|,\ 
\sigma_{\text{intra}}^2 = \mathbb{E}_c [\text{Tr}(\mathbf{\Sigma}_c)],
\end{equation}
where $\mathbf{\mu}_c$ and $\mathbf{\Sigma}_c$ denote the class mean and covariance respectively. $\mathbb{E}_c[\cdot]$ denotes the mean over all classes.

Each session is divided into training and inference. 

\subsubsection{Training}  
In each session $t$, only $\mathcal{D}_{train}^{t}$ can be accessed.
The model $\phi(\cdot )$ can be decomposed into feature extractor $f(\cdot )$ and linear classifiers $\mathbf{W}$, i.e., 
$\phi(\mathbf{x}) = \mathbf{W}^{\top}f(\mathbf{x}) \in \mathbb{R}^{\left | \mathcal{C}^0 \cdots\cup\mathcal{C}^t\right |\times 1}$, where $f(\mathbf{x}) \in \mathbb{R}^{d \times 1}$ and $\mathbf{W}=\{\mathbf{w}_c| c\in \mathcal{C}^{0}\cdots\cup\mathcal{C}^t\}\in \mathbb{R}^{d \times  \left | \mathcal{C}^0 \cdots\cup\mathcal{C}^t\right |}$.
The weight of the classifier has been extended to represent the novel classes $\mathcal{C}^t$ introduced by the new session $t$, ensuring that the classifiers $W$ remain aligned with all seen classes.

\subsubsection{Inference} 
The feature extractor $f(\cdot )$ and classifiers $\mathbf{W}$ are used for prediction.
In each session $t$, the model needs to be tested on samples $\{\mathbf{x}|\mathbf{x} \in \mathcal{D}_{test}^{t}\}$ from all seen classes, i.e., $\mathcal{C}^0\cup\mathcal{C}^1\cdots\cup\mathcal{C}^t$.
 The average classification accuracy calculated in this context is referred to as {\bf overall accuracy}. 
 The average classification accuracy calculated specifically for the test samples of classes that arrive in the incremental sessions with insufficient training samples (i.e., $\mathcal{C}^1\cdots\cup\mathcal{C}^t$) is called {\bf incremental class accuracy}.

% We propose a strategy to utilize this off-the-shelf knowledge from the dynamic and open environment.
% For a given test sample $\mathbf{x}$, the prediction result is determined by taking the argmax over a set of cosine similarities:
% \begin{equation}
% y_{i}^\star=\underset{c\in {\textstyle \cup_{j=0}^{t}} \mathcal{C}^{j}}{\mathrm{argmax}} \frac{w_{c}^{\top} f(\mathbf{x})}{\left \| w_{c} \right \| \cdot \left \| f(\mathbf{x}) \right \|}.
% \label{eq:cos}
% \end{equation}

\subsection{The Basic Strategy of EXP2}
In this section, we provide a theoretical analysis of the inherent barriers in fine-grained classification, and from this analysis, we derive a novel learning paradigm and strategy.

%-------------------------------------------------------
\begin{lemma}
% [Proof in supplementary materials \cref{app:overlap}]
\label{lem:overlap}
Let $\hat{\mu}_c$ be estimated class means with $\max_c\|\hat{\mu}_c - \mu_c\| \leq \epsilon$. The probability of inter-class overlap satisfies:
\begin{equation}
\mathbb{P}(\mathrm{Overlap}) \geq 1 - \Phi\left(\frac{\delta_{\mathrm{inter}} - 2\epsilon}{2\sigma_{\mathrm{intra}}}\right),
\end{equation}
where $\Phi$ is the standard normal cumulative distribution function (CDF) and $\epsilon$ is the estimation error.
\end{lemma}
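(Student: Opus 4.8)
\medskip
\noindent\emph{Proof approach (plan).}
The plan is to read $\mathrm{Overlap}$ as the event that a test sample drawn from one of the two \emph{closest} classes is mapped to the wrong side of the linear/nearest-(estimated-)mean boundary used by $\phi$, and then to lower bound its probability by a one-dimensional Gaussian tail. First I would fix the pair $(c,c')$ attaining the minimum in $\delta_{\mathrm{inter}}$, so $\|\mu_c-\mu_{c'}\| = \delta_{\mathrm{inter}}$; since the global overlap event contains the event ``classes $c$ and $c'$ are confused'', it suffices to bound the latter from below. Draw $\mathbf{x}$ from class $c$, so $\mathbb{E}[\mathbf{x}] = \mu_c$ and $\mathrm{Cov}(\mathbf{x}) = \Sigma_c$, and project onto the unit vector $\mathbf{u}$ pointing from $\hat{\mu}_c$ to $\hat{\mu}_{c'}$: with $Z = \langle \mathbf{x}-\mu_c,\mathbf{u}\rangle$ one has $\mathbb{E}[Z] = 0$, $\mathrm{Var}(Z) = \mathbf{u}^{\top}\Sigma_c\mathbf{u}$, and $\mathbf{x}$ falls into the estimated region of $c'$ precisely when $Z$ exceeds the projection $\tau$, measured from $\mu_c$, of the midpoint $\tfrac12(\hat{\mu}_c+\hat{\mu}_{c'})$. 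This reduces the claim to a scalar tail estimate $\mathbb{P}(Z \ge \tau)$.

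Next I would control $\tau$. Writing $\hat{\mu}_c = \mu_c + e_c$, $\hat{\mu}_{c'} = \mu_{c'} + e_{c'}$ with $\|e_c\|,\|e_{c'}\| \le \epsilon$, a short computation gives $\tau = \tfrac12\|\hat{\mu}_c-\hat{\mu}_{c'}\| + \langle e_c,\mathbf{u}\rangle$, and the triangle inequality $\bigl|\,\|\hat{\mu}_c-\hat{\mu}_{c'}\| - \|\mu_c-\mu_{c'}\|\,\bigr| \le 2\epsilon$ together with $|\langle e_c,\mathbf{u}\rangle| \le \epsilon$ confines $\tau$ to an interval around $\tfrac12\delta_{\mathrm{inter}}$ of half-width $O(\epsilon)$; arranging the bookkeeping so that the threshold the sample must cross is at most $\tfrac12(\delta_{\mathrm{inter}}-2\epsilon)$ makes precise the intuition that estimation error effectively shrinks the usable margin by $2\epsilon$. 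Finally, treating the intra-class fluctuation $Z$ as (or dominating its tail by) a centered Gaussian whose standard deviation is matched to the intra-class scale $\sigma_{\mathrm{intra}}$ — invoking $\mathbf{u}^{\top}\Sigma_c\mathbf{u} \le \mathrm{Tr}(\Sigma_c)$ and the definition $\sigma_{\mathrm{intra}}^2 = \mathbb{E}_c[\mathrm{Tr}(\Sigma_c)]$ — yields $\mathbb{P}(Z \ge t) \ge 1 - \Phi(t/\sigma_{\mathrm{intra}})$ for $t \ge 0$, and substituting $t = \tfrac12(\delta_{\mathrm{inter}}-2\epsilon)$ gives $\mathbb{P}(\mathrm{Overlap}) \ge 1 - \Phi\!\bigl(\tfrac{\delta_{\mathrm{inter}}-2\epsilon}{2\sigma_{\mathrm{intra}}}\bigr)$.

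I expect the main obstacle to lie in the second step and the Gaussian reduction rather than in any heavy computation: one must be careful about the \emph{direction} in which the two (independent) estimation errors $e_c$ and $e_{c'}$ displace the separating hyperplane and tilt its normal, and one must justify collapsing the true per-direction variance $\mathbf{u}^{\top}\Sigma_c\mathbf{u}$ onto the single scalar $\sigma_{\mathrm{intra}}^2$, which is where a mild homogeneity assumption on the class covariances (or the convention that $\sigma_{\mathrm{intra}}$ captures the effective spread along the discriminative direction) is needed. Once the geometry is pinned down, the remaining content is a routine one-dimensional normal tail bound together with the monotonicity of $\Phi$.
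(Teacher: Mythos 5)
Your plan follows essentially the same route as the paper's proof: both reduce overlap to a one-dimensional Gaussian tail across the nearest-(estimated-)mean boundary between the two closest classes, take the worst-case alignment of the estimation errors along $\mu_c-\mu_{c'}$ so that the effective margin shrinks from $\delta_{\mathrm{inter}}$ to $\delta_{\mathrm{inter}}-2\epsilon$, and evaluate $1-\Phi$ at the resulting standardized threshold. The one caveat is your variance step --- $\mathbf{u}^{\top}\Sigma_c\mathbf{u}\le\mathrm{Tr}(\Sigma_c)$ bounds the projected variance, and hence the tail probability, from \emph{above} rather than below --- but the paper sidesteps this exactly as you anticipate, by simply assuming isotropic features $X\sim\mathcal{N}(\mu_c,\sigma_{\mathrm{intra}}^{2}\mathbf{I})$ so that the projected standard deviation equals $\sigma_{\mathrm{intra}}$ by fiat.
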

\begin{proof}
Assuming Gaussian features $X \sim \mathcal{N}(\mathbf{\mu}_c, \sigma_{\text{intra}}^2\mathbf{I})$, consider the decision boundary shift between classes $c$ and $c'$, the condition for being misclassified as class $c'$ is
\begin{align}
\|X - \hat{\mu}_{c'}\|^{2} &< \|X - \hat{\mu}_{c}\|^{2}, \label{eq:misclassification_condition}
\end{align}
simplifying, we get
\begin{align}
(\hat{\mu}_{c}-\hat{\mu}_{c'})^{\top}X &< \frac{\|\hat{\mu}_{c}\|^{2}-\|\hat{\mu}_{c'}\|^{2}}{2}. \label{eq:simplified_inequality1}
\end{align}

Let $\hat{\mu}_{c}=\mu_{c}+\Delta_{c}$, $\hat{\mu}_{c'}=\mu_{c'}+\Delta_{c'}$, where $\|\Delta_{c}\|,\|\Delta_{c'}\|\leq\epsilon$. 
By substituting into Eq. \ref{eq:simplified_inequality1}, 
we obtain the approximate inequality:
\begin{align}
\begin{aligned}
(\mu_{c}-\mu_{c'}+\Delta_{c}-\Delta_{c'})^{\top}X < &\frac{\|\mu_{c}\|^{2}-\|\mu_{c'}\|^{2}}{2}\\
&+(\mu_{c}^{\top}\Delta_{c}-\mu_{c'}^{\top}\Delta_{c'}),
\end{aligned} \label{eq:approximated_inequality}
\end{align}
where second-order terms $\|\Delta_{c}\|^{2}$ and $\|\Delta_{c'}\|^{2}$ are omitted for simplicity.

Introduce the variable $Z$ to represent the projection of the decision boundary without errors:
\begin{align}
Z &=2(\mu_{c'}-\mu_{c})^{\top}X+(\|\mu_{c}\|^{2}-\|\mu_{c'}\|^{2}). \label{eq:define_Z}
\end{align}
% When $X\sim N(\mu_{c},\sigma_{\text{intra}}^2\mathbf{I})$, 
The distribution of $Z$ is
\begin{align}
\mathbb{E}[Z] = 
-\|\mu_{c} - \mu_{c'}\|^{2} = -\delta_{\text{inter}}^{2},
\end{align}
\begin{align}
\text{Var}(Z) =4\|\mu_{c} - \mu_{c'}\|^{2}\sigma_{\text{intra}}^{2}&\ge4\delta_{\text{inter}}^{2}\sigma_{\text{intra}}^{2}. \label{eq:variance_Z}
\end{align}
Therefore, $Z\sim N(-\delta_{\text{inter}}^{2},4\delta_{\text{inter}}^{2}\sigma_{\text{intra}}^{2})$.

After introducing the estimation error, the decision boundary becomes
\begin{align}
\begin{aligned}
Z &=2(\mu_{c'}-\mu_{c})^{\top}X+(\|\mu_{c}\|^{2}-\|\mu_{c'}\|^{2}) \\
&>2(\Delta_{c}-\Delta_{c'})^{\top}X-2(\mu_{c}^{\top}\Delta_{c}-\mu_{c'}^{\top}\Delta_{c'})=E.
\end{aligned}\label{eq:define_Z}
\end{align}
We consider the worst-case alignment where $\Delta_{c}$ and $\Delta_{c'}$ lie in the same direction as $(\mu_{c} - \mu_{c'})$, and $\|\Delta_{c}\|=\|\Delta_{c'}\|=\epsilon$.
% W.l.o.g., assume that the directions of estimation error $\Delta_{c}$ and $\Delta_{c'}$ are the same as that of $\mu_{c} - \mu_{c'}$, and $\|\Delta_{c}\|=\|\Delta_{c'}\|=\epsilon$,
the random variable $E$ is
\begin{align}
\mathbb{E}[E] = 
-2(\mu_{c}-\mu_{c'})^{\top}\Delta_{c'} \leq -2\epsilon\delta_{\text{inter}},
\end{align}
\begin{align}
\text{Var}(E) =4\sigma_{\text{intra}}^{2}(\Delta_{c}-\Delta_{c'})^{2}&=0, \label{eq:variance_Z}
\end{align}
% so $E\leq -2\epsilon\delta_{\text{inter}}$. 
The inter-class overlap caused by estimation error, i.e., the overlap between the true class $c$ distribution and the estimated class $c'$ distribution, will result in class $c$ samples being misclassified. 
Thus,
\begin{align}
\begin{aligned}
\mathbb{P}(\mathrm{Overlap}) \geq \mathbb{P}(Z>E)  &\ge 1-\Phi\left(\frac{-2\epsilon\delta_{\text{inter}}+\delta_{\text{inter}}^{2}}{2\delta_{\text{inter}}\sigma_{\text{intra}}}\right)\\
&= 1-\Phi\left(\frac{\delta_{\text{inter}}-2\epsilon}{2\sigma_{\text{intra}}}\right).
\end{aligned} \label{eq:probability_result}
\end{align}
\end{proof}

\begin{figure}[t]
\centering
\includegraphics[width=0.9\columnwidth]{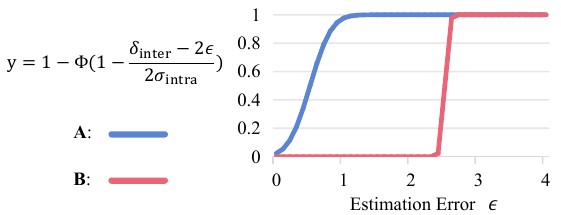} %
\caption{Illustration of an example to show the impact of estimation error on the probability of inter-class overlap. 
In this figure, $\delta_{\text{inter}} = 1$, $\sigma_{\text{intra}} = 0.5$ for {\bf A}, and $\delta_{\text{inter}} = 5$, $\sigma_{\text{intra}} = 0.1$ for {\bf B}, i.e., {\bf A} has a smaller $\delta_{\text{inter}}$ and a larger $\sigma_{\text{intra}}$ than {\bf B}, making it more susceptible to inter-class overlap caused by estimation errors.}
\label{fig:over}
\end{figure}

According to Lemma \ref{lem:overlap}, for complex fine-grained class distributions where $\delta_{\mathrm{inter}}$ is relatively small and $\sigma_{\mathrm{intra}}$ is relatively large, estimation errors are more likely to cause inter-class overlap, as illustrated in Fig. \ref{fig:over}.
Therefore, the accuracy of class distribution estimation is crucial for fine-grained classification.
% However, the closed nature of existing learning paradigms hinders the accurate estimation of class distributions.
% \begin{theorem}[Proof in supplementary materials \cref{app:itlb}]
% \label{thm:itlb}
% Let $I(X;Y)$ be the mutual information between features $X$ and labels $Y$, $H(Y)$ the label entropy, and $C$ the number of classes. The minimum achievable error rate satisfies:
% \begin{equation}
% \epsilon_{\mathrm{min}} \geq \max\left(0, \frac{H(Y) - I(X;Y) - 1}{\log (C-1)} \right).
% % \epsilon_{\mathrm{min}} \geq \frac{H(Y) - I(X;Y) - 1}{\log C}
% \end{equation}
% The lower bound decreases monotonically as $I(X;Y)$ increases through better class distribution estimation.
% \end{theorem}
% Furthermore, Cramér-Rao Lower Bound \cite{1993Fundamentals} and \cref{thm:itlb} together indicate that utilizing more data can improve class distribution estimation accuracy, thereby reducing the lower bound of classification error.
% Combining the above two points, it can be seen that the restrictions of existing learning paradigms on usable data become the key barrier limiting fine-grained classification.
Furthermore, Cramér-Rao Lower Bound \cite{1993Fundamentals} indicates that utilizing more data can reduce the class distribution estimation error.

Combining the above two points, it can be seen that the restrictions of existing learning paradigms on usable data become the key barrier limiting fine-grained classification.
Therefore, considering the real-world scenarios of fine-grained classification discussed in the introduction, we propose a novel learning paradigm to leverage all accessible off-the-shelf knowledge during system operation. 
Unlike traditional static class distribution estimation, this novel learning paradigm enables dynamic estimation based on data changes by allowing learning during inference.
% Compared to traditional static class distribution estimation, this novel learning paradigm enables dynamic estimation based on real-time data, 
% effectively mitigating error accumulation caused by data changes (Proofs are in the supplementary materials \cref{app:dynamic}).

Based on the above analysis and the proposed learning paradigm, we design a novel strategy to leverage unlabeled data $\mathcal{D}_{test}^{t}$ encountered during inference.
This strategy is implemented as a plug-in method built upon the FSCIL framework.
% Moreover, we prove the error accumulation of traditional static class distribution estimation in dynamic fine-grained data scenarios, while dynamic estimation can effectively mitigate it (see supplementary materials \cref{app:dynamic}).
% we reveal that dynamic estimation can effectively address the error accumulation caused by fine-grained data distribution changes (see supplementary materials \cref{app:dynamic}).
% 中文注释：引理1量化了估计误差对类间混淆的影响，特别当δ_inter小时需要严格控制ε
% The above analysis highlights that meeting the demands for sufficient information and real-time dynamics is crucial for fine-grained classification.
% we propose a novel learning paradigm to leverage accessible off-the-shelf knowledge during inference. 
% The aforementioned $\mathcal{D}_{test}^{t}$ refers to the unlabeled data encountered during system operation. 
% to address the demands for sufficient information and real-time dynamics for fine-grained classification.
% this paper proposes a novel strategy that works as a plug-in method built upon the FSCIL framework. 
% The above analysis emphasizes that meeting the demands for sufficient information and real-time dynamics is crucial for fine-grained classification.
% Based on the above analysis, this paper proposes a novel strategy to address the demands of fine-grained classification for sufficient information and real-time dynamics.
In light of the high efficiency demands during inference, the proposed method, consistent with prior approaches, employs class prototypes as concise estimates of class distributions.
% Considering the high efficiency requirements during the inference phase, following the previous methods, the proposed method uses class prototypes as simple representations of class distribution estimation.
The prototype is the mean vector of all labeled samples for each class $c$ and serves as the classifier weight during inference,
\begin{equation}
\mathbf{w}_{c}=\frac{1}{Num_c}\sum_{y_{i}=c}f(\mathbf{x}_{i}), 
\end{equation}
where $Num_c$ denotes the number of labeled samples in class $c$, and $f(\mathbf{x}_{i})$ represents the feature vector of sample $\mathbf{x}_{i}$.
The test sample label is predicted by finding the nearest prototype classifier $\mathbf{w}_{c}$.
Fig. \ref{fig1} illustrates the differences between prior arts and our EXP2 method. Algorithm \ref{alg:overall} outlines the overall method process, consisting of exploring useful knowledge (Sec. \ref{exploring}) and exploiting knowledge to optimize classifiers (Sec. \ref{exploiting}).

% \subsection{Breaking Fine-Grained Data Barriers}

% Considering the more realistic dynamic and open environment, systems encounter numerous data containing target knowledge, which are online, off-the-shelf, and cost-free. 
% Therefore, we consider leveraging this cost-free data to overcome the challenging data barriers faced by existing fine-grained methods.
% and overlook other off-the-shelf and cost-free data in the dynamic and open system. Based on this consideration,
% Based on the idea of utilizing off-the-shelf knowledge from the dynamic and open environment to address the scarcity and difficulty in collecting fine-grained data, 

\subsection{Exploring Useful Knowledge}
\label{exploring}
During inference, the system will encounter a lot of unlabeled data provided by users, which are off-the-shelf and cost-free. Therefore, we attempt to explore useful knowledge for all seen classes from this data.
% According to the problem definition, the classifier $w_{c}$ serves as the representative template for class $c$, so we utilize $w_{c}$ to explore useful knowledge for each class $c$.

% Leveraging this representation, we aim to explore useful knowledge for each class $c$.

Specifically, 
% the system will encounter 
test samples $\mathcal{D}_{test}^{t}$ will arrive during the inference process of session $t$, and we need to explore test samples that fit each class distribution.
For each class representation $\mathbf{w}_c$, the cosine similarity between $\mathbf{w}_c$ and feature vectors of all accessible test samples $\mathbf{x}$ are calculated as follows, 
\begin{equation}
\mathcal{S}(\mathbf{w}_{c},f(\mathbf{x}))=\frac{\mathbf{w}_{c}^{\top} f(\mathbf{x})}{\left \| \mathbf{w}_{c} \right \| \cdot \left \| f(\mathbf{x}) \right \|},  \mathbf{x} \in \mathcal{D}_{test}^{t}.
\label{eq:1}
\end{equation}

First, we select the samples corresponding to $R$ feature vectors with the highest similarity to $\mathbf{w}_{c}$,
\begin{equation}
\mathcal{X}_c=\{\mathbf{x}|\mathbf{x} \in \mathcal{D}_{test}^{t}, \mathcal{S}(\mathbf{w}_{c},f(\mathbf{x}))\ge \alpha _c\},
\label{eq:2}
\end{equation}
where $\alpha _c$ is the $R$-th highest cosine similarity between $\mathbf{w}_c$ and feature vectors $f(\mathbf{x})$ ($\mathbf{x} \in \mathcal{D}_{test}^{t}$).
Setting the same $R$ for all classes can help maintain balance among these classes to some extent.
Then, considering the small inter-class differences among fine-grained categories, samples with low similarity to $\mathbf{w}_c$ are filtered out using a confidence threshold $\tau $, as follows, 
\begin{equation}
\mathcal{X}_c=\{\mathbf{x}|\mathbf{x} \in \mathcal{X}_c, \mathcal{S}(\mathbf{w}_{c},f(\mathbf{x}))> \tau \}.
\label{eq:3}
\end{equation}
This step further ensures the effectiveness of the explored knowledge, 
% thereby avoiding the exploration of knowledge belonging to other fine-grained classes.
and the threshold setting allows this method to function properly in an open environment, meaning it could reject samples from unseen classes.

This process yields the exploration boundary of each class with respect to the test samples, and samples $\mathcal{X}_c$ within the boundaries will be used to optimize the prototype of class $c$.

\begin{figure}[!t]
\begin{algorithm}[H]
\caption{Inference process in session $t$.}
\begin{algorithmic}[1] %[1] enables line numbers
\REQUIRE The batch of test samples $\mathcal{D}_{test}^{t}$, feature extractor $f(\cdot )$, and linear classifiers $\mathbf{W}=\{\mathbf{w}_c| c\in \mathcal{C}^{0}\cdots\cup\mathcal{C}^t\}$.
\ENSURE The prediction results for all $\mathbf{x}\in \mathcal{D}_{test}^{t}$.
\STATE Compute the feature vector $f(\mathbf{x})$ for $\mathbf{x}\in \mathcal{D}_{test}^{t}$
\FOR{$\mathbf{w}_c \in \mathbf{W}$} 
\STATE // Exploring useful knowledge
\STATE $\mathcal{S}$$\gets$ Compute the cosine similarity between $\mathbf{w}_c$ and  $f(\mathbf{x}) (\mathbf{x} \in \mathcal{D}_{test}^{t})$ (Eq. \eqref{eq:1})
\STATE $\mathcal{X}_c$$\gets$ Select and filter out samples from $\mathcal{D}_{test}^{t}$ according to $\mathcal{S}$ (Eq. \eqref{eq:2} and Eq. \eqref{eq:3})
\STATE // Exploiting knowledge to optimize classifiers
\IF {$c \in \mathcal{C}^{0}$}
\STATE $\beta_c \gets \beta^{t+1}$ // Compute update degree
\ELSE
\STATE $\beta_c \gets \beta^{t-\left \lfloor \frac{c-|\mathcal{C}^{0}|}{N}  \right \rfloor }$ // Compute update degree
\ENDIF
\STATE $\mathbf{w}_c$$\gets$ Update $\mathbf{w}_c$ using $\mathcal{X}_c$ and $\beta_c$ (Eq. \eqref{eq:4})
\ENDFOR
\STATE \textbf{return} $y^\star=\underset{c\in {\textstyle \cup_{j=0}^{t}} \mathcal{C}^{j}}{\mathrm{argmin}} {\left \|  f(\mathbf{x}) -\mathbf{w}_{c}\right \|}^2$ for all $\mathbf{x} \in \mathcal{D}_{test}^{t}$
\end{algorithmic}
\label{alg:overall}
\end{algorithm}
\end{figure}

\subsection{Exploiting Knowledge to Optimize Classifiers}
\label{exploiting}
Existing FSCIL methods derive classifiers from scarce labeled samples through direct computation or backpropagation (BP), which is insufficient to fully describe a fine-grained class and often results in poor performance for incremental classes.
% Moreover, backpropagation may affect testing efficiency.
% affects testing efficiency but may also results in performance degradation.
% Existing FSCIL methods obtain classifiers from training samples through direct computation or backpropagation, which often leads to poor performance for incremental classes with scarce training samples. 
Thus, we optimize prototype classifiers (i.e., class representations) using the explored knowledge.
% without backpropagation.

% This approach not only aligns each classifier more closely with the real class distribution, thereby minimizing the negative impact caused by large intra-class variance in fine-grained categories, but also allows the classifiers to reflect changes in fine-grained class distributions that frequently occur in real-time.

Specifically,
in the inference stage of session $t$, the classifier is updated by taking the weighted average of the original classifier $\mathbf{w}_c$ and the feature vectors of the explored unlabeled samples $\mathcal{X}_c$,
\begin{equation}
  % \mathbf{w}_c  =\frac{1}{|\mathcal{X}_c|}\sum_{\mathbf{x} \in \mathcal{X}_c}((1-\beta_c) \cdot \mathbf{w}_c + \beta_c \cdot f(\mathbf{x})),
  \mathbf{w}_c  = (1-\beta_c) \cdot \mathbf{w}_c +\beta_c \cdot \frac{1}{|\mathcal{X}_c|}\sum_{\mathbf{x} \in \mathcal{X}_c} f(\mathbf{x}),
  \label{eq:4}
\end{equation}
where $\beta_c$ controls the degree of updating the classifier for class $c$.
Classifiers that do not encounter useful knowledge during exploration will not be updated.
% According to the law of large numbers, properly optimizing the classifier by incorporating more useful in-distribution data can bring the classifier closer to the mean of the true class distribution.
As the classifier incorporates more and more knowledge, it gradually approaches the mean of the real class distribution. 
Therefore, during the incremental process, $\beta_c$ needs to be continuously reduced from its initial value, thereby reducing the risk of crossing the subtle inter-class gaps in fine-grained categories.
% especially in complex scenarios with a large number of categories.
The specific calculation way is as follows,
\begin{equation} 
\beta_c=\begin{cases}
\beta^{t+1}, c \in \mathcal{C}^{0}
 \\
\beta^{t-\left \lfloor \frac{c-|\mathcal{C}^{0}|}{N}  \right \rfloor }, c \in \mathcal{C}^{1}\cdots\cup\mathcal{C}^t
\end{cases},
\label{eq:5}
\end{equation}
where $\beta$ is a hyperparameter in the range $(0, 1)$. 
As session $t$ increases, $\beta_c$ decays exponentially.
Thus, the earlier a class starts being learned, the smaller the degree of updates to its classifier.

This process not only aligns the classifiers more closely with the real data distribution but also allows them to reflect data changes in real-time.
Since the open-set performance of a classifier can be improved by enhancing its closed-set accuracy \cite{DBLP:conf/iclr/Vaze0VZ22}, the explored knowledge also becomes increasingly accurate.
Through this mutually reinforcing process, the classifier would gradually approach the optimal from a holistic perspective.

\begin{table}[t]
\centering
%\resizebox{.95\columnwidth}{!}{
\caption{Statistics of datasets.}
\begin{tabular}{lccccc}
    \toprule
    \multicolumn{1}{c}{Dataset} & Total Classes &   $|\mathcal{C}^{0}|$    & T     & N     & K \\
    \midrule
    CUB200 & 200 & 100   & 10    & 10    & 5 \\
    FGVCAircraft & 100 & 50    & 10    & 5     & 5 \\
    StanfordCars & 196 & 96    & 10    & 10    & 5 \\
    StanfordDogs & 120  & 70    & 10    & 5     & 5 \\
    CIFAR100 & 100 & 60    & 8     & 5     & 5 \\
    {\em mini}ImageNet & 100 & 60    & 8     & 5     & 5 \\
    \bottomrule
    \end{tabular}%

\label{tab:data}
\end{table}

\section{Experiments}
% In this section, we evaluate the effectiveness of EXP2 on six datasets and further analyze the robustness and practicality of our strategy.

\subsection{Experimental Setup}
\label{sec:set}
\subsubsection{Datasets}
In addition to the fine-grained dataset CUB200 \cite{Wah2011TheCB} commonly used in the FSCIL problem, we introduce three new fine-grained datasets: 
FGVCAircraft \cite{DBLP:journals/corr/MajiRKBV13}, StanfordCars \cite{DBLP:conf/cvpr/YangLLT15}, and StanfordDogs \cite{Khosla2012NovelDF}.
This allows for a more comprehensive demonstration of the method's performance across various types of fine-grained datasets.
Additionally, we conduct experiments on coarse-grained datasets CIFAR100 \cite{Krizhevsky2009LearningML} and {\em mini}ImageNet \cite{DBLP:journals/ijcv/RussakovskyDSKS15} commonly used in FSCIL to demonstrate the method's versatility.
For three commonly used datasets in FSCIL, we follow the split in previous works \cite{DBLP:conf/cvpr/ZhangSLZPX21, DBLP:journals/pami/YangLZLLJY23, DBLP:conf/cvpr/SongZSP0023}. 
For three newly introduced datasets, we split them according to existing ways.
The statistics of datasets are listed in Table \ref{tab:data}.

\begin{table*}[!t]
  \centering
  \caption{The overall accuracy (Overall) and incremental class accuracy (Inc.) of sessions 2, 4, 6, 8, and 10 on the CUB200 dataset.}
   \begin{tabular}{lcccccccccc}
    \toprule
    \multicolumn{1}{c}{\multirow{2}[4]{*}{{\bf Method}}} &  \multicolumn{2}{c}{2} & \multicolumn{2}{c}{4} & \multicolumn{2}{c}{6} & \multicolumn{2}{c}{8} & \multicolumn{2}{c}{10 (Last)} \\
\cmidrule{2-11}       & Overall & Inc.  & Overall & Inc.  & Overall & Inc.  & Overall & Inc.  & Overall & Inc. \\
    \midrule
    
    FACT \cite{DBLP:conf/cvpr/0001WYMPZ22} & 70.84  & 47.02  & 65.56  & 39.70  & 61.74  & 39.25  & 58.41  & 38.25  & 56.94  & 39.74  \\
    CLOM \cite{DBLP:conf/nips/ZouZLL22}  & 72.94  & 37.44  & 67.80  & 36.49  & 63.94  & 37.97  & 60.62  & 38.69  & 59.58  & 40.67  \\
    Bidist \cite{DBLP:conf/cvpr/Zhao0XC0NF23}  & 70.12  & 51.14  & 64.37  & 44.28  & 60.71  & 41.93  & 57.41  & 40.83  & 55.94  & 42.15  \\
    OSHHG \cite{DBLP:journals/tmm/CuiYPTL24}  & 59.83  & 43.17  & 55.07  & 34.99  & 51.56  & 32.41  & 47.50  & 28.13  & 45.87  & 28.73  \\
    MICS \cite{DBLP:conf/wacv/KimJPY24}  & 72.25  & 46.29  & 67.43  & 44.24  & 64.61  & 45.66  & 61.77  & 44.66  & 61.29  & 47.47  \\
    M2SD \cite{DBLP:conf/aaai/LinWL0L24}  & 73.58  &    -   & 68.73  &  -  & 64.73  &    -   & 62.70  &   -    & 60.96  & - \\
    % Comp-FSCIL \cite{DBLP:conf/icml/ZouZZ0024} & 74.34  & 50.35  & 68.77  & 45.96  & 64.85  & 44.11  & 62.12  & 43.04  & 61.17  & 44.98  \\
    DyCR \cite{10531293}  & 71.69  & 42.76  & 66.59  & 41.64  & 62.66  & 42.01  & 60.57  & 39.51  & 58.46  & 41.90  \\
    % CSR \cite{DBLP:conf/wacv/Hu0T0HW25} & 74.82 & 52.30 & 69.71 &  46.56 & 66.36 & 45.54 & 63.35 & 44.79 & 61.80 & 46.35 \\
    ADBS \cite{DBLP:conf/aaai/LiTY0DY25}  & 73.51 & 50.74 & 68.59 & 45.54 & 64.36 & 45.35 & 61.84 & 44.20 & 60.14 & 45.75\\
    SAVC+D2A \cite{DBLP:conf/cvpr/Zhao000X25} & 75.28 & 55.15 &  69.71 & 48.07 & 65.37 &  46.48 &  62.49 &  45.52 &  61.46 &  47.43 \\
    \midrule
    Decoupled-Cosine \cite{DBLP:conf/nips/VinyalsBLKW16} & 67.41  & 34.31  & 60.30  & 29.09  & 54.94  & 26.74  & 51.04  & 25.77  & 48.81  & 26.99  \\
    +T3A \cite{DBLP:conf/nips/IwasawaM21}  & 66.65  & 32.71  & 59.53  & 27.87  & 54.83  & 27.34  & 51.33  & 27.37  & 48.83  & 27.69  \\
    +TAST \cite{DBLP:conf/iclr/JangCC23}  & 61.43  & 0.00  & 52.01  & 0.00  & 45.71  & 0.00  & 40.30  & 0.00  & 36.45  & 0.13  \\
    \rowcolor{blue!10}\textit{+EXP2 (Ours) }  & 67.26  & 35.38  & 60.30  & 30.64  & 55.60  & 29.31  & 51.54  & 28.04  & 49.33  & 28.94  \\
    \midrule
    CEC \cite{DBLP:conf/cvpr/ZhangSLZPX21}   & 68.48  & 37.45  & 62.21  & 33.82  & 57.67  & 32.75  & 53.94  & 31.46  & 52.21  & 33.02  \\
    +T3A   & 67.90  & 34.81  & 61.72  & 31.92  & 56.81  & 31.01  & 51.46  & 27.79  & 48.95  & 29.18  \\
    +TAST  & 62.19  & 0.00  & 53.08  & 0.00  & 46.03  & 0.00  & 41.07  & 1.01  & 36.78  & 0.81  \\
    \rowcolor{blue!10}\textit{+EXP2 (Ours) }  & 69.39  & 43.87  & 63.26  & 38.36  & 58.43  & 35.77  & 54.46  & 33.68  & 53.00  & 35.20  \\
    \midrule
    TEEN \cite{DBLP:conf/nips/Wang0ZZY23}  & 72.15  & 52.56  & 66.88  & 47.41  & 62.76  & 46.10  & 60.04  & 43.54  & 58.48  & 44.84  \\
    +T3A   & 71.80  & 50.62  & 65.84  & 43.56  & 63.17  & 44.08  & 59.43  & 43.12  & 58.44  & 45.40  \\
    +TAST  & 53.24  & 0.00  & 45.73  & 0.00  & 39.42  & 0.00  & 34.61  & 0.00  & 31.12  & 0.00  \\
    \rowcolor{blue!10}\textit{+EXP2 (Ours) }  & 72.52  & \textbf{55.68} & 67.87  & 50.06  & 64.17  & \textbf{49.79} & 61.82  & 48.18  & 60.49  & 49.23  \\
    \midrule
    SAVC \cite{DBLP:conf/cvpr/SongZSP0023}  & 75.42  & 52.09  & 70.23  & 47.15  & 66.15  & 45.41  & 63.56  & 45.49  & 62.67  & 47.55  \\
    +T3A   & 74.31  & 51.78  & 68.45  & 45.41  & 64.26  & 43.87  & 61.06  & 43.94  & 60.08  & 45.62  \\
    +TAST  & 57.84  & 3.17  & 50.40  & 5.33  & 43.84  & 3.55  & 38.99  & 4.22  & 35.54  & 3.46  \\
    \rowcolor{blue!10}\textit{+EXP2 (Ours) }&   \textbf{75.60} & 55.48  & \textbf{70.61} & \textbf{50.21} & \textbf{66.67} & 49.37  & \textbf{63.95} & \textbf{48.96} & \textbf{63.58} & \textbf{50.77} \\
    \midrule
    PFR \cite{DBLP:journals/pr/PanZYZG24}  & 71.40  & 43.28  & 67.96  & 39.51  & 62.87  & 40.83  & 59.59  & 40.90  & 57.54  & 42.90  \\
    +T3A  & 71.27  & 46.91  & 66.16  & 41.98  & 63.03  & 42.76  & 60.27  & 42.04  & 58.52  & 43.76  \\
    +TAST  & 70.44  & 41.22  & 64.15  & 37.75  & 60.81  & 38.96  & 58.32  & 38.95  & 57.25  & 41.42  \\
    \rowcolor{blue!10}\textit{+EXP2 (Ours) }  & 71.70  & 48.29  & 66.72  & 43.79  & 63.80  & 45.21  & 61.68  & 44.57  & 60.11  & 46.55  \\
    \bottomrule
    \end{tabular}%
    
  \label{tab:com}%
\end{table*}%

\subsubsection{Experimental details} 
As discussed earlier, our method is independent of the training phase and the baseline model. 
Training-related settings follow the baseline methods, and results are reproduced using public code in the same environment.
The experiments are conducted using PyTorch library on a single NVIDIA 3090.
To ensure experimental fairness and rationality, we conduct experiments based on representative FSCIL methods, including Decoupled-Cosine, CEC, TEEN, SAVC, and PFR.
Decoupled-Cosine and CEC serve as classic baselines.
TEEN and SAVC are strong methods that emphasize improving performance on incremental classes with limited labeled data. PFR is a rare FSCIL method specifically designed for fine-grained datasets.
In addition, we compare our method with two plug-in TTA methods capable of utilizing test data: T3A, a rare TTA method that does not rely on BP for optimization, and TAST, which represents the typical approach that utilizes BP-based parameter optimization.
% and our method can also be applied to more complex fine-grained feature learning models to achieve better results.
% Theoretically, our method can be applied to more complex fine-grained feature learning models to achieve better results, serving as a plug-in method during the inference phase.
% As a plug-in method for the inference stage, it does not require any settings related to training. To ensure a fair comparison, all training-related settings are kept consistent with the original baseline method, and 
% The training-related settings are consistent with the baseline methods, and results for all baseline methods are reproduced using publicly available code under the same experimental environment.
For the newly introduced fine-grained datasets, we set hyperparameters to be consistent with the CUB200 dataset. 
% The experimental results we report based on the same baseline method are from the same trained model. 
The $R$ and $\tau $ are consistently set to 40 and 0.8 across all datasets and baselines. 
Due to the base classes having more training samples compared to the incremental classes, we set $\beta$ to 0.05 for base classes and 0.3 for incremental classes across all datasets and baselines.
These hyperparameter settings are chosen as they achieve the best average performance across datasets and baselines.
The source code of the proposed method is available at
\url{https://github.com/Legenddddd/EXP2}.

\begin{figure*}[t]
\centering
\includegraphics[width=1\textwidth]{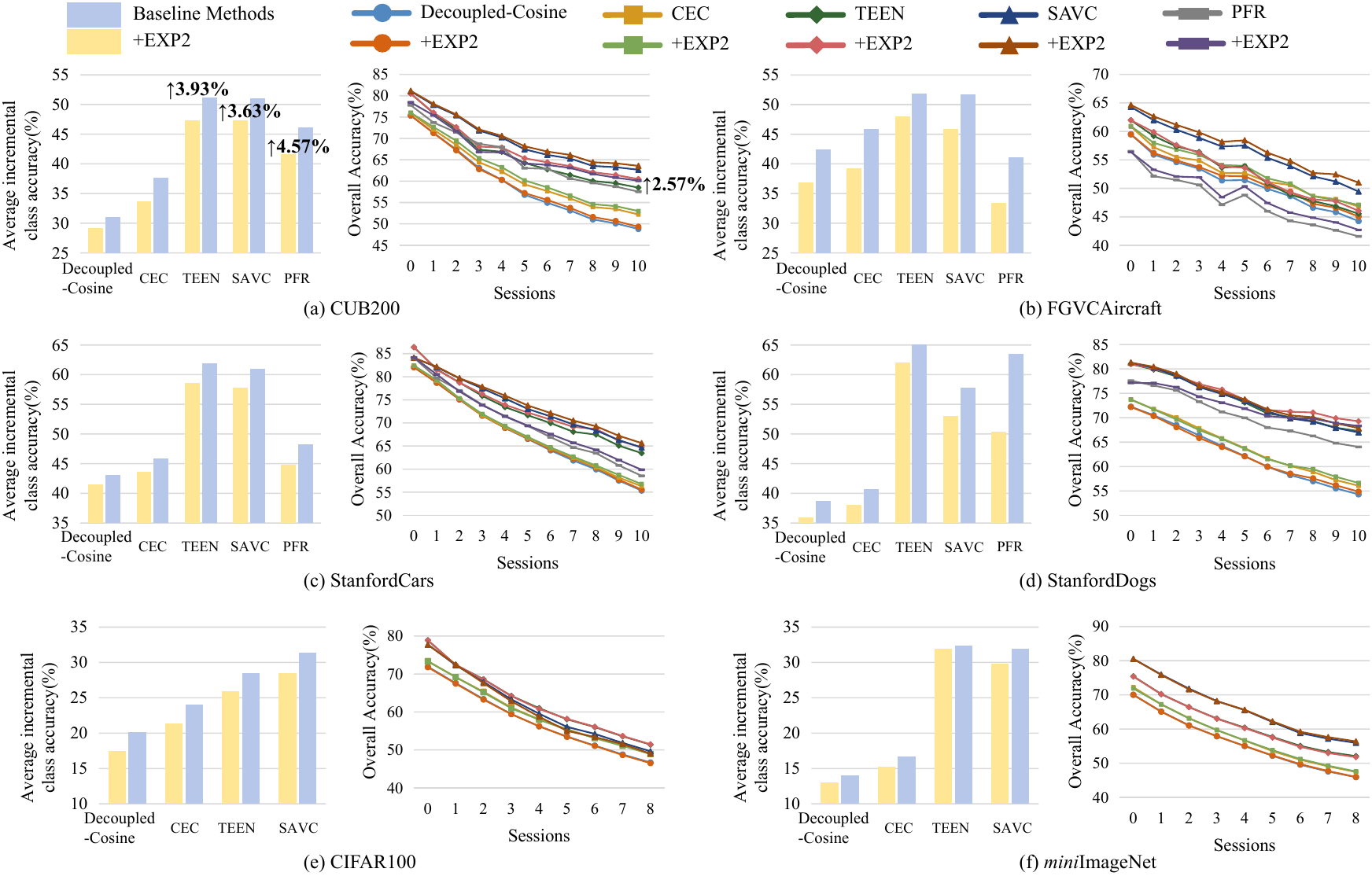} % Reduce the figure size so that it is slightly narrower than the column.
\caption{The average of incremental class accuracy across different sessions and the overall accuracy of each session.
% For detailed results, please refer to the Appendix.
}
\label{fig:com}
\end{figure*}

\subsection{Comparison Results}
In this section, we compare our method with recent representative FSCIL methods and plug-and-play TTA methods in Table \ref{tab:com} and Fig. \ref{fig:com}.
% Since this work aims to explore useful knowledge beyond labeled data to optimize the representation of fine-grained categories, 
Given that incremental classes have limited labeled data and are typically poorly represented by the original methods, we separately report the incremental class accuracy.

% Additional detailed results are provided in the supplementary materials.

\subsubsection{EXP2 consistently and significantly improves the performance of FSCIL methods, especially for fine-grained data and incremental classes}
(1) It can be observed that although FSCIL methods such as TEEN and SAVC have focused on the accuracy of incremental classes and already perform well in FSCIL, their incremental class accuracy remains significantly lower than overall accuracy on fine-grained datasets (Table \ref{tab:com}). 
Applying our proposed method to these approaches helps better characterize fine-grained class representations, further improving incremental class accuracy, boosting overall accuracy, and achieving state-of-the-art results (Table \ref{tab:com} and Fig. \ref{fig:com}).
(2) Furthermore, our method is more effective on fine-grained datasets compared to traditional ones (Fig. \ref{fig:com}). 
This validates our discussion that fine-grained datasets, with large intra-class variance and small inter-class differences, make accurate class representation more challenging and increase the risk of negatively impacting classification results.
% This validates our discussion that fine-grained datasets, characterized by large intra-class variance and small inter-class differences, make it more challenging to obtain accurate class representations and more likely to negatively impact classification results.
% from limited distribution information. Moreover, biased class representations are 
(3) Applying our method to PFR, which focuses on discriminative feature extraction for fine-grained categories, achieves consistently more remarkable performance improvements (Fig. \ref{fig:com}). This indicates that our method can mutually complement and enhance existing fine-grained classification approaches that emphasize feature extraction.

In summary, our method shows significant potential when applied to already well-performing FSCIL approaches and fine-grained feature extraction methods, enhancing fine-grained classification performance by supplementing and refining category representations.

\subsubsection{ EXP2 outperforms TTA methods in fine-grained classification with complex and dynamically changing semantic distributions}
% As discussed in the related work, although TTA methods can utilize test data, they are not suitable for the problem discussed in this paper. 
(1) T3A is a rare TTA method that also does not use BP optimization. However, its overemphasis on test data likely introduces noise, causing it to fail on most baselines (Table \ref{tab:com}).
(2) TAST represents the common approach of TTA methods, utilizing BP for parameter optimization. 
% While effective BP optimization of simple modules can be achieved with a small number of test samples and epochs in domain generalization tasks with simple semantics, it leads to the collapse of fine-grained distributions that are complex and dynamically changing.
While effective BP optimization can work with few test samples and epochs in simple domain generalization tasks, it collapses on complex and dynamically changing fine-grained distributions, resulting in poor performance (Table \ref{tab:com}).
Further exploration of this interesting phenomenon can be found in Sec. \ref{sec:tta}.
(3) PFR focuses on fine-grained feature extraction while avoiding classifier and feature extractor optimization during the incremental phase, ensuring a stable class distribution and thus resisting severe TTA-induced degradation.
% PFR, which focuses on fine-grained feature extraction while avoiding the optimization of existing classifiers and feature extractors, achieves a more stable class distribution and resists severe degradation from TTA methods. 
% This enables PFR to achieve a more stable class distribution, making it less susceptible to severe degradation caused by TTA methods. 

In summary, unlike TTA which also utilizes test data, our method can not only adapt to dynamic data changes but also effectively optimize the complex distribution of fine-grained categories.
% thereby overcoming barriers from limited labeled data and frequent semantic feature changes in fine-grained classification. 
Thus, the proposed learning paradigm and strategy hold unique research value and significance.

% On one hand, this indicates that our method can adapt to dynamic data changes while mitigating the issue of insufficient labeled data, thereby breaking inherent barriers in fine-grained classification. On the other hand, it alleviates the significantly lower accuracy of incremental classes in existing FSCIL methods, making the FSCIL scenario truly meaningful.
% In summary, on one hand, these results indicate that our method considers the characteristics of fine-grained data in more application-oriented scenarios and demonstrates strong performance, thereby breaking the inherent barriers of limited labeled data and frequent semantic feature changes in fine-grained classification. This also demonstrates the value of further research into the proposed learning paradigm and problem formulation, which leverages inference data to optimize the model.
% On the other hand, it alleviates the substantially lower accuracy of incremental classes in existing FSCIL methods, making the FSCIL scenario truly meaningful.

\subsection{Analyses}
In this section, we conduct a thorough analysis of the EXP2 strategy and method on the Decoupled-Cosine baseline.

% Table generated by Excel2LaTeX from sheet 'Sheet1'
\begin{table*}[!t]
  \centering
  \caption{Comparison of performance in different ways of exploiting knowledge on the CUB200 dataset.}
    \begin{tabular}{lccccccccccc}
    \toprule
    \multicolumn{1}{c}{\multirow{2}[4]{*}{\makecell{Exploiting\\Way}}} & \multicolumn{2}{c}{2} & \multicolumn{2}{c}{4} & \multicolumn{2}{c}{6} & \multicolumn{2}{c}{8} & \multicolumn{2}{c}{10 (Last)} & \multicolumn{1}{c}{\multirow{2}[4]{*}{\makecell{Inference\\Time (ms)}}} \\
\cmidrule{2-11}          & Overall & Inc.  & Overall & Inc.  & Overall & Inc.  & Overall & Inc.  & Overall & Inc. \\
    \midrule
    Baseline & 67.41  & 34.31  & 60.30  & 29.09  & 54.94  & 26.74  & 51.04  & 25.77  & 48.81  & 26.99 & 1.241\\
    \textbf{Average} & 63.27  & 23.67  & 53.38  & 18.14  & 45.15  & 16.47  & 39.03  & 14.24  & 36.45  & 16.21 & 1.253\\
    \textbf{Weight} & \textbf{67.46} & 33.94  & \textbf{60.40} & 28.48  & 55.20  & 26.00  & 51.15  & 25.10  & 48.93  & 26.20 & 1.244\\
    \textbf{Finetuning} & 67.43  & 33.77  & 60.25  & 28.57  & 54.92  & 26.23  & 50.98  & 25.30  & 48.81  & 26.61 & 1.302\\
    Ours  & 67.23  & \textbf{35.38} & 60.30  & \textbf{30.64} & \textbf{55.60} & \textbf{29.31} & \textbf{51.61} & \textbf{28.04} & \textbf{49.33} & \textbf{28.94} & 1.245\\
    \bottomrule
    \end{tabular}%
    
  \label{tab:ana}%
\end{table*}%

\subsubsection{The basic strategy is practical}  Fig. \ref{fig:abl} illustrates the impact of the hyperparameters $R$ and $\tau $ (Eq. \eqref{eq:2} and Eq. \eqref{eq:3}) on the performance in the last session. Where $\tau =1$ denotes filtering out all samples, representing the original performance of the baseline. 
Our method demonstrates universal performance improvements across different values and achieves superior performance under specific hyperparameter settings.
This indicates that with an appropriate exploiting knowledge approach, the model only needs to identify samples that better reflect the class distribution compared to others, without having to meet the stringent requirements that are often impractical in practice.
Consequently, leveraging cost-free data encountered during system operation is a feasible strategy in practical scenarios, and our proposed Exploring Useful Knowledge process further ensures the quality of the utilized knowledge, thereby avoiding adverse effects such as noise.

\subsubsection{The way of exploiting useful knowledge is crucial}
Table \ref{tab:ana} demonstrates several different ways of exploiting knowledge. It can be observed that simply averaging the classifier and the feature vectors of all test samples in each session (\textbf{Average}) significantly reduces both overall accuracy and incremental class accuracy. 
This indicates that controlling the degree of the influence of the explored knowledge is crucial.  
While employing weighted averaging (\textbf{Weight}, Eq. \eqref{eq:4}) tends to maintain overall accuracy, incremental class accuracy still declines. 
Further decaying the weights in the weighted averaging based on the session (\textbf{Ours}, Eq. \eqref{eq:4} and Eq. \eqref{eq:5}) can improve overall accuracy and incremental class accuracy. 
This implies that for fine-grained classes with small inter-class differences, continuous equal-step movement could cause the class representations to deviate from the correct position. 
Besides, directly using BP to optimize the classifier (\textbf{Finetuning}) leads to a slight performance drop in the complex and unstable fine-grained class distributions. Thus, compared to other methods, our method is a more reasonable way to leverage knowledge.
% This implies that for classes with limited training samples (i.e., initially low accuracy), the continuous equal-step movement might lead to deviations from the correct position.
% This suggests that for classes with scarce training samples (i.e., initially low accuracy), continuous equal-step movement may deviate from the correct position.
% Therefore, our method is a good way to leverage knowledge, and how to leverage the explored knowledge is worth further investigation.

% This implies that for fine-grained classes with small inter-class differences, continuous equal-step movement could cause the class representations to deviate from the correct position and potentially cross the class boundaries, thereby degrading performance.
% This suggests that for classes with scarce training samples (i.e., initially low accuracy), continuous equal-step movement may deviate from the correct position.
% Therefore, our method is a good way to leverage knowledge, and how to leverage the explored knowledge is worth further investigation.
\begin{figure}[t]
\centering
\includegraphics[width=0.95\columnwidth]{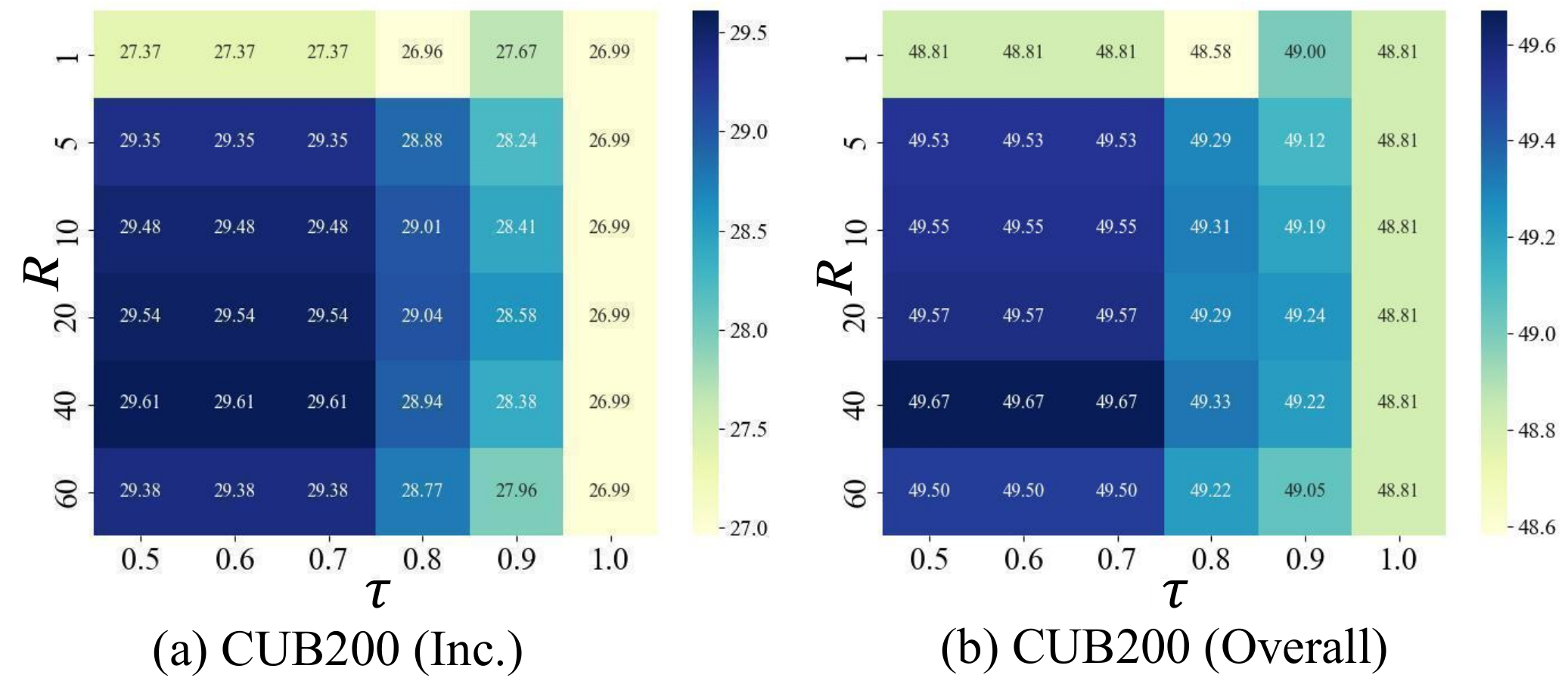} % Reduce the figure size so that it is slightly narrower than the column.
\caption{The impact of hyperparameters $R$ and $\tau$ on performance.}
% on the performance.}
\label{fig:abl}
\end{figure}

\subsubsection{The strategy is more effective when labeled training samples are insufficient}
Fig. \ref{fig:abl1} illustrates the impact of the hyperparameter $\beta$ (Eq. \eqref{eq:5}) in the last session on the CUB200 dataset, where the number of base classes and incremental classes is equal.
We fix $\beta$ for base (incremental) classes and attempt to record the impact of different $\beta$ for incremental (base) classes on performance. 
It can be observed that when $\beta > 0$, better results can be achieved. Moreover, the performance improvement for incremental classes is significantly greater than that for base classes.
This suggests that the strategy is more effective for incremental classes with scarce training samples. 
Since the base class classifiers are already positioned stably near the center of their fine-grained class distributions, the process of exploring and exploiting knowledge has a limited impact on their performance.

\begin{figure}[t]
\centering
\includegraphics[width=1\columnwidth]{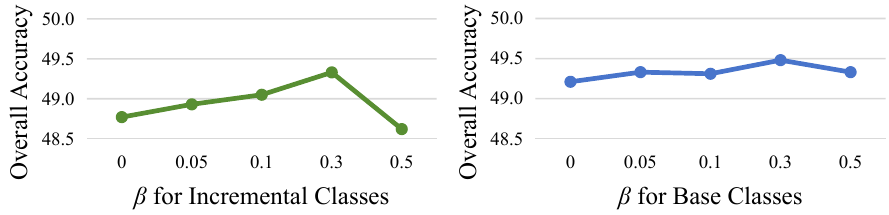} % Reduce the figure size so that it is slightly narrower than the column.
\caption{The impact of hyperparameter $\beta $ on performance.}
\label{fig:abl1}
\end{figure}

\subsubsection{The EXP2 method is lightweight and efficient} 
As an inference-stage approach,
% that does not alter the model architecture, 
it does not increase the model's parameters or FLOPs. The last column of Table \ref{tab:ana} shows the average inference time per sample during the incremental process under the same experimental environment.
% Our method has minimal impact on prediction time since the Exploring and Exploiting process involves only a few simple computations. 
The time of {\bf Finetuning} refers to optimizing the classifier for a single epoch (serving as a lower bound for finetuning-based methods), yet it is still significantly slower than our method.
This suggests that using the explored knowledge to optimize more components, such as the entire model, would further reduce efficiency.
Our Exploring and Exploiting process involves only a few simple computations and, like other non-backpropagation methods (\textbf{Average} and \textbf{Weight}), has negligible impact on inference time.
% Our Exploring and Exploiting process involves only a few simple computations, similar to other non-backpropagation methods, and has negligible impact on prediction time.
This demonstrates the efficiency of our forward-only classifier optimization.

\begin{figure}[t]
\centering
\includegraphics[width=0.95\columnwidth]{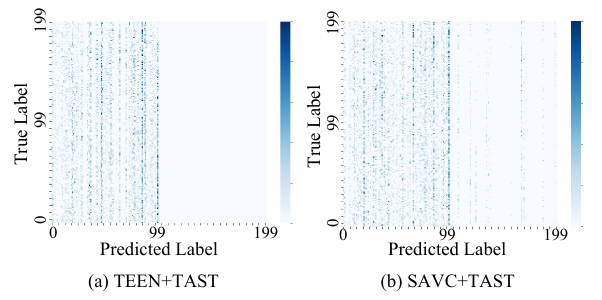} % Reduce the figure size so that it is slightly narrower than the column.
\caption{Confusion matrix of the classification results in the final session. Labels 0–99 indicate base classes, while labels 100–199 correspond to incremental classes.}
\label{fig:abl2}
\end{figure}

\subsubsection{Conventional TTA methods often lead to classification collapse for classes with few labeled samples}
\label{sec:tta}
Fig.~\ref{fig:abl2} illustrates the classification collapse phenomenon caused by backpropagation-based TTA methods, such as TAST, on the CUB200 dataset. It can be observed that most misclassified incremental class samples are incorrectly predicted as base class samples. Similar observations have also been reported in prior FSCIL studies~\cite{DBLP:conf/nips/Wang0ZZY23}, while conventional TTA methods often further aggravate this phenomenon. 
This is because FSCIL methods that do not focus on fine-grained feature extraction (e.g., TEEN and SAVC) tend to yield more unstable class representations for incremental classes than PFR, which is tailored for fine-grained datasets. 
As a result, applying conventional TTA methods not only fails to refine these representations using test data, but also distorts them, thereby exacerbating the inherent bias toward base classes in the FSCIL paradigm. 
This highlights the difficulty of optimizing class representations for fine-grained semantic distributions that are complex and dynamically changing.

\section{Conclusion and Future Work}
In this paper, we analyze the barriers faced by fine-grained classification in real-world applications and propose a learning paradigm and strategy to break these barriers using cost-free data. 
On one hand, as an inference-stage method for fine-grained classification, this strategy does not conflict with existing advanced fine-grained feature extraction networks, enabling further performance improvements. 
On the other hand, when applied within the FSCIL framework, 
this strategy could inspire a shift in FSCIL method design towards a more dynamic and open perspective, offering deeper insights and exploration of the FSCIL scenario.
% this strategy could inspire a shift in the design of existing FSCIL methods towards a more dynamic and open perspective, providing a deeper understanding and exploration of the FSCIL scenario.

% In the future, we plan to extend this work beyond the FSCIL framework for further exploration and research.
% For fairness and ease of understanding, the methods and experiments in this paper are conducted based on the existing flexible FSCIL scenario.
% we provide preliminary evidence for the research value of the novel learning paradigm. 
% For experimental fairness and ease of understanding, this paper is conducted based on the FSCIL scenario.
% It is worth noting that the core idea of the method involves the selection and optimization based on existing models and any accessible (test) data to address the diversity and dynamics of fine-grained features, and is not limited to specific task settings. 
For experimental fairness and clarity, this paper is based on the FSCIL scenario, providing preliminary evidence for the research value of the novel learning paradigm.
As a plug-in method, it can be applied to more flexible and open scenarios, including those without category restrictions on test data.
% Therefore, this strategy can be applied to more flexible and open scenarios, such as those that do not restrict the categories of test data.
In future work, we will extend this new problem and learning paradigm beyond the FSCIL framework, developing more refined and diverse settings.

% {\appendix[Proof of the Zonklar Equations]
% Use $\backslash${\tt{appendix}} if you have a single appendix:
% Do not use $\backslash${\tt{section}} anymore after $\backslash${\tt{appendix}}, only $\backslash${\tt{section*}}.
% If you have multiple appendixes use $\backslash${\tt{appendices}} then use $\backslash${\tt{section}} to start each appendix.
% You must declare a $\backslash${\tt{section}} before using any $\backslash${\tt{subsection}} or using $\backslash${\tt{label}} ($\backslash${\tt{appendices}} by itself
%  starts a section numbered zero.)}

%{\appendices
%\section*{Proof of the First Zonklar Equation}
%Appendix one text goes here.
% You can choose not to have a title for an appendix if you want by leaving the argument blank
%\section*{Proof of the Second Zonklar Equation}
%Appendix two text goes here.}

\bibliographystyle{IEEEtran}
\bibliography{egbib}

% \newpage

\vfill

\end{document}